\newtheorem{mydef}{Definition}
\newtheorem{mytheo}{Theorem}
\begin{document}
%
% paper title
% Titles are generally capitalized except for words such as a, an, and, as,
% at, but, by, for, in, nor, of, on, or, the, to and up, which are usually
% not capitalized unless they are the first or last word of the title.
% Linebreaks \\ can be used within to get better formatting as desired.
% Do not put math or special symbols in the title.
\title{Learning Invariable Semantical Representation from Language for Extensible Policy Generalization}
%
%
% author names and IEEE memberships
% note positions of commas and nonbreaking spaces ( ~ ) LaTeX will not break
% a structure at a ~ so this keeps an author's name from being broken across
% two lines.
% use \thanks{} to gain access to the first footnote area
% a separate \thanks must be used for each paragraph as LaTeX2e's \thanks
% was not built to handle multiple paragraphs
%

\author{Yihan Li, Jinsheng Ren, Tianrun Xu, Tianren Zhang, Haichuan Gao, and Feng Chen, ~\IEEEmembership{Member,~IEEE,}
% <-this % stops a space
\thanks{This work was supported in part by the National Natural Science Foundation of China under Grant 61836004, and in part by the Tsinghua-Guoqiang research program under Grant 2019GQG0006. Corresponding author: Feng Chen}
\thanks{Y. Li, J. Ren, T. Zhang, H. Gao and F. Chen are with the Department of
	Automation, Tsinghua University, Beijing 100086, China, with the
	Beijing Innovation Center for Future Chip, Beijing 100086, China,
	and with the LSBDPA Beijing Key Laboratory, Beijing 100084, China (e-mail: lyh19@mails.tsinghua.edu.cn; rjs17@mails.tsinghua.edu.cn; zhang-tr19@mails.tsinghua.edu.cn; ghc18@mails.tsinghua.edu.cn; chenfeng@mail.tsinghua.edu.cn)}
\thanks{Tianrun Xu is with College of Information, North China University of Technology, 100144, Beijing, China (e-mail: 18151010710@mail.ncut.edu.cn)}
}

% note the % following the last \IEEEmembership and also \thanks - 
% these prevent an unwanted space from occurring between the last author name
% and the end of the author line. i.e., if you had this:
% 
% \author{....lastname \thanks{...} \thanks{...} }
%                     ^------------^------------^----Do not want these spaces!
%
% a space would be appended to the last name and could cause every name on that
% line to be shifted left slightly. This is one of those "LaTeX things". For
% instance, "\textbf{A} \textbf{B}" will typeset as "A B" not "AB". To get
% "AB" then you have to do: "\textbf{A}\textbf{B}"
% \thanks is no different in this regard, so shield the last } of each \thanks
% that ends a line with a % and do not let a space in before the next \thanks.
% Spaces after \IEEEmembership other than the last one are OK (and needed) as
% you are supposed to have spaces between the names. For what it is worth,
% this is a minor point as most people would not even notice if the said evil
% space somehow managed to creep in.

% The paper headers
\markboth{Under Review}%
{Shell \MakeLowercase{\textit{et al.}}: Bare Demo of IEEEtran.cls for IEEE Journals}
% The only time the second header will appear is for the odd numbered pages
% after the title page when using the twoside option.
% 
% *** Note that you probably will NOT want to include the author's ***
% *** name in the headers of peer review papers.                   ***
% You can use \ifCLASSOPTIONpeerreview for conditional compilation here if
% you desire.

% If you want to put a publisher's ID mark on the page you can do it like
% this:
%\IEEEpubid{0000--0000/00\$00.00~\copyright~2015 IEEE}
% Remember, if you use this you must call \IEEEpubidadjcol in the second
% column for its text to clear the IEEEpubid mark.

% use for special paper notices
%\IEEEspecialpapernotice{(Invited Paper)}

% make the title area
\maketitle

% As a general rule, do not put math, special symbols or citations
% in the abstract or keywords.
\begin{abstract}
	Recently, incorporating natural language instructions into reinforcement learning (RL) to learn semantically meaningful representations and foster generalization has caught many concerns. However, the semantical information in language instructions is usually entangled with task-specific state information, which hampers the learning of semantically invariant and reusable representations. In this paper, we propose a method to learn such representations called \textit{element randomization}, which extracts task-relevant but environment-agnostic semantics from instructions using a set of environments with randomized elements, e.g., topological structures or textures, yet the same language instruction. We theoretically prove the feasibility of learning semantically invariant representations through randomization. In practice, we accordingly develop a hierarchy of policies, where a high-level policy is designed to modulate the behavior of a goal-conditioned low-level policy by proposing subgoals as semantically invariant representations. Experiments on challenging long-horizon tasks show that (1) our low-level policy reliably generalizes to tasks against environment changes; (2) our hierarchical policy exhibits extensible generalization in unseen new tasks that can be decomposed into several solvable sub-tasks; and (3) by storing and replaying language trajectories as succinct policy representations, the agent can complete tasks in a one-shot fashion, i.e., once one successful trajectory has been attained.
\end{abstract}

% Note that keywords are not normally used for peerreview papers.
\begin{IEEEkeywords}
	Deep reinforcement learning, language conditional reinforcement learning, hierarchical reinforcement learning, policy generalization, element randomization.
\end{IEEEkeywords}

% For peer review papers, you can put extra information on the cover
% page as needed:
% \ifCLASSOPTIONpeerreview
% \begin{center} \bfseries EDICS Category: 3-BBND \end{center}
% \fi
%
% For peerreview papers, this IEEEtran command inserts a page break and
% creates the second title. It will be ignored for other modes.
\IEEEpeerreviewmaketitle

\section{Introduction}
% The very first letter is a 2 line initial drop letter followed
% by the rest of the first word in caps.
% 
% form to use if the first word consists of a single letter:
% \IEEEPARstart{A}{demo} file is ....
% 
% form to use if you need the single drop letter followed by
% normal text (unknown if ever used by the IEEE):
% \IEEEPARstart{A}{}demo file is ....
% 
% Some journals put the first two words in caps:
% \IEEEPARstart{T}{his demo} file is ....
% 
% Here we have the typical use of a "T" for an initial drop letter
% and "HIS" in caps to complete the first word.
\IEEEPARstart{R}{einforcement} learning (RL) \cite{sutton1988reinforcement} has achieved remarkable results in many applications \cite{,mnih2013playing,silver2017mastering}. However, current RL algorithms often suffer from poor generalization ability. One of the prevailing ideas towards this problem is that current RL methods lack a general and compact representation (abstraction) to express shared and reusable knowledge among different environments with similar semantics \cite{oh2017zero,zhang2017a}. On the other hand, natural language serves exactly as a general representation with interpretability. Therefore, a growing number of research begins to focus on integrating natural language into policy learning, namely language-conditional RL \cite{,DBLP:conf/ijcai/LuketinaNFFAGWR19}. The final goal is to empower the agent with the ability of extracting semantical information in language instructions and learning generalizable policies \cite{jiang2019language,hu2019hierarchical}.

However, current methods can hardly learn reusable policies that can adapt to new environments. On the contrary, their learned policies are usually environment-specific which cannot always be reused. For example, a task that requires the agent to pick up a ball will lead the agent to memorize several successful trajectories instead of extracting the semantical concept of ``ball''. As a consequence, the learned representation is unstable and correlates with the environment, which hinders policy generalization.

The main reason for the aforementioned problem is that the language is usually applied to an end-to-end policy learning framework \cite{jiang2019language,hu2019hierarchical,devin2019plan}, which directly maps language instructions together with states to actions without explicitly incentivizing the development of invariant representations. An invariant representation is invariant to environment changes that still preserve the semantics of the task (e.g., ``pick up the green ball''), thus is generalizable and reusable across different environments. Hence, two major issues need to be addressed to learn such representations: (1) the motivation for semantical invariance of the representations, and (2) a carefully-designed policy structure that contains invariant components to carry these invariant representations.

\begin{figure}
	\centering
	\includegraphics[scale = 0.25]{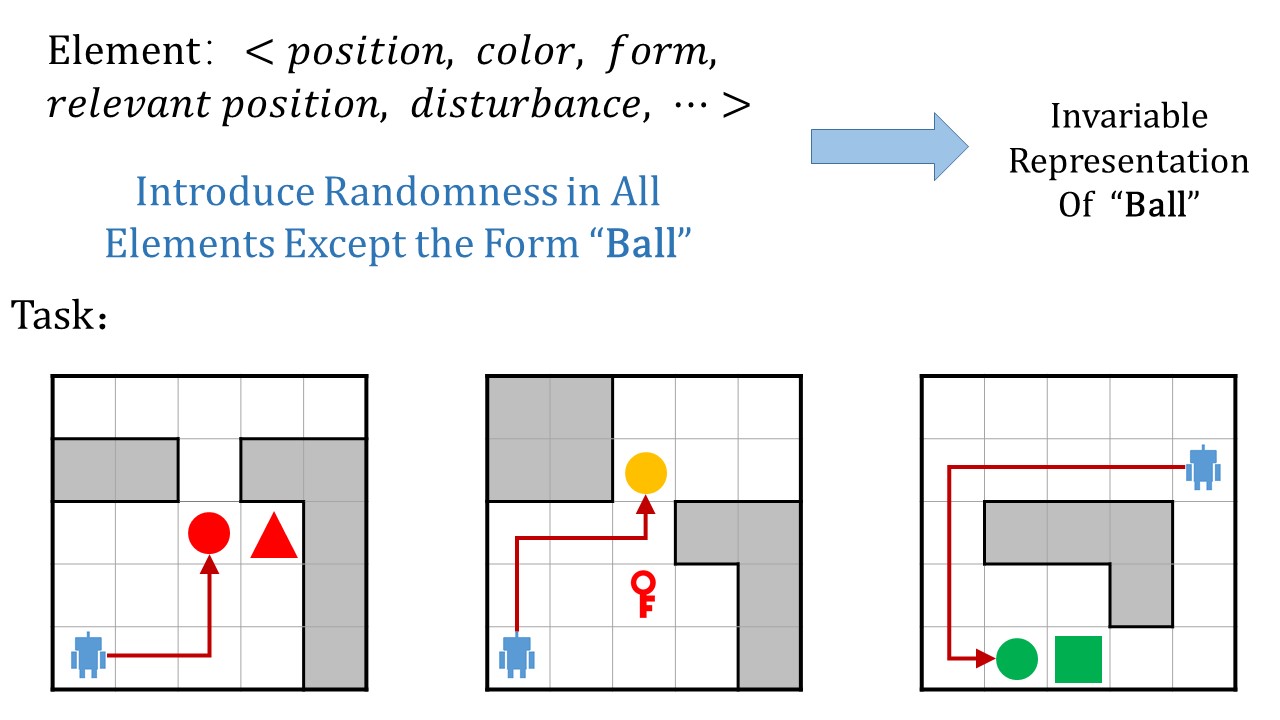}
	\caption{We introduce randomness into elements except the goal to decouple the goal element from the environment. With making completing task consist with extracting semantical invariants, we can get reusable representations which can resist the change of the environment.}
	\label{FIG:1}
\end{figure}

In this paper, we propose a method termed \emph{element randomization} to learn invariant representations. Our main idea is to introduce randomness in environment-specific elements (or components), e.g., the topological structure of the state space or texture of the objects, to facilitate the agent to extract semantical commonalities that can resist environment changes. This idea has also been used by approaches in other domains, e.g., Domain Randomization (DR) \cite{,tobin2017domain,DBLP:conf/iros/GaoYS0020}. The main difference is that we introduce randomness at the \emph{task level}, meaning that anything that does not correlate with the core semantics of the task may be randomized (e.g., in a maze navigation task, not only the texture of the maze can be randomized, but also the structure of the maze itself can be randomized as long as it does not alter the semantics of the task ``find the exit''), in contrast with prior methods that only perform visual-level randomization. Our approach thus provides motivation for decoupling the entangled elements and ensures invariance (see Fig. \ref{FIG:1}). We also give theoretical justifications, showing that randomizing elements can indeed result in invariant representations.
% So that it will force the agent to extract the task-agnostic invariants ignoring the change of tasks and environment. These representations can be reused in any task to help policy generalization.

In practice, we design an adaptive model structure to extract semantically invariant representations and learn the corresponding semantically invariant policy.
% Naturally, invariable representation has a requirement of invariable structure.
Concretely, we construct a two-level goal-conditioned hierarchical policy network. Our policy network consists of a high-level policy that receives language instructions and generates subgoals as semantically invariant representations, and the low-level policy executes atomic actions in accord with the proposed subgoals. As the subgoals represent stable language-conditioned representations, they can be reversely translated into their language form. That means tasks can be explored and recorded as language trajectories. Therefore, we introduce an external memory to record the trajectories with extensible lengths. With the external memory, the agent can solve a new task with arbitrary length by exploring through selecting subgoals and exploiting by replaying the successful trajectories.

To demonstrate the superiority of our method, we conduct experiments on challenging tasks with long horizons based on the BabyAI platform \cite{chevalier-boisvert2018babyai}. Experimental results validate the efficacy of our method, showing that (1) our low-level policy reliably generalizes to tasks against environment changes; (2) our hierarchical policy exhibits extensible generalization in unseen new tasks that can be decomposed into several sub-tasks solvable by the low level; and (3) by storing and replaying language trajectories using the external memory, the agent can accomplish the task in a one-shot fashion, i.e., once one successful trajectory has been attained.

In short, our contributions are as follows:

\begin{enumerate}
	\itemsep=0pt
	\item We propose a new language-conditional policy learning paradigm, which extracts semantically invariant representasions by a novel element randomization method.
	
	\item We build an adaptive hierarchical network for simultaneous language comprehending and task executing. We also add an external memory to record abstracted language trajectories for exploring and solving unseen complex tasks.
	
	\item We both theoretically and empirically demonstrate the efficacy our model. Experimental results on several difficult tasks show the superiority of our agent compared with several strong baselines.
\end{enumerate}

\begin{figure*}
	\centering
	\includegraphics[scale=.45]{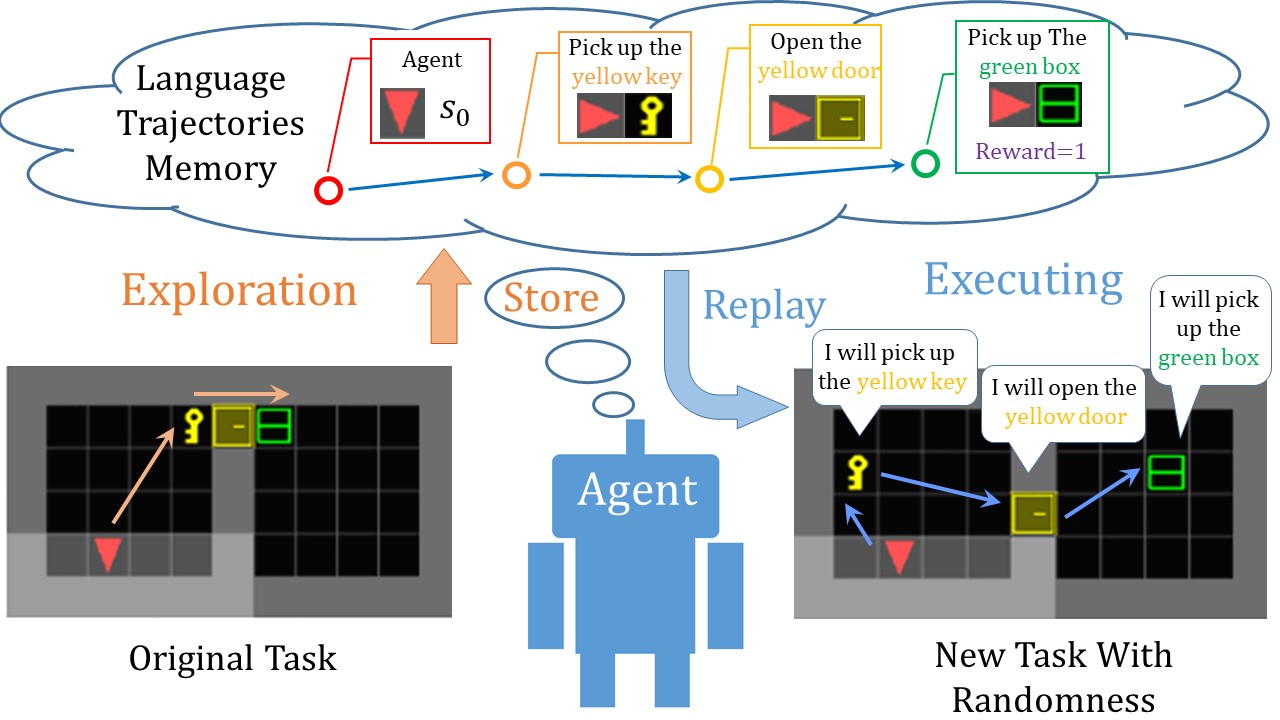}

	\caption{We let the agent learning invariable semantical representations and corresponding policies. When facing a task, the agent can explore the environment with low-dimension subgoal space instead of the original state space. Meanwhile, the trajectories can be stored as language form and be replayed after collecting the final reward and the agent can overcome the randomness in the environment.}
	\label{FIG:2}
\end{figure*}

\section{Related Work}

\subsection{RL Methods with Natural Language Condition}

Recently, many researchers begin to focus on learning a framework with the power of natural language, such as \cite{hu2019hierarchical,zhong2019rtfm,xiong2018hierarchical,jiang2019language,bahdanau2018learning,hermann2017grounded} and \cite{,branavan2011learning}. Language can be used in many ways, such as the tool of knowledge transfer \cite{,narasimhan2015language,mei2016listen}, the reward generator \cite{ho2016generative,goyal2019using,wang2019reinforced}, the representation of state or action space \cite{yuan2018counting}, the communicating domain knowledge or task-independent corpora \cite{,oh2017zero,narasimhan2018grounding}. In this paper, the most related work is to use language to describe the task as inputting instructions, hoping language can guide the agent to complete the task and generalize well in some domains. However, comprehending language is such a difficult problem itself. They aiming to use language to help reinforcement learning gaining more cumulative reward. But just using language as additional information, the agent may leverage language as trivial coding of a task \cite{lake2018generalization,liu2020compositional}. Because the agent can learn to comprehend the part of the semantical information, but it is not sure that the semantics is the stable and general one that we want, not to mention that the agent can automatically extract and reuse it. 

The difference is that our work introduces an invariable subgoal space between language and original actions, which is specially trained to represent object-oriented sub-policies with semantics.
These sub-policies is general and reusable in the whole environment with randomness. That means our agent can explore in the low-level subgoal space rapidly and generalize to new tasks with a high probability.

\subsection{Element-Oriented Compositional RL }
Some researchers analyzed RL tasks by disassembling elements in tasks and construct corresponding policies \cite{,singh1992transfer,meer2020exploiting,garnelo2016towards,zamani2018deep}. Some of them also use symbolic coding as task representations to further leverage these element-oriented policies. These methods have achieved much success in compositional generalization problems. However, to some degree, these methods focus more on task-specific elements (such as a fixed special point in a room), which cannot be reused in all the tasks. It will limit the generalization ability.

Different from these works, our method focus on the elements which are interactive in the whole environment, and at any time they are task-agnostic. That is, our sub-policies are object-oriented and can be reused in any task. That guarantees the sub-policy can be described by language without any ambiguity. So that the combination of sub-policies can be recorded as language trajectories and replayed stably. 

\subsection{HRL Methods for Generalization}

Some hierarchical reinforcement learning (HRL) researchers focus on generalization problems, using the measure of multi-task learning or graph representation. Such as \cite{,andreas2017modular} building policy sketches to guide the agent to complete tasks. \cite{,frans2017meta} construct an HRL method with meta parameters as a high-level abstraction. \cite{,sohn2018hierarchical} give the agent a structured subtask graph to represent the relationship of tasks. However, to solve complex tasks and generalize to more tasks, previous works need manual prior knowledge of tasks information more and less. For instance, framework in \cite{,andreas2017modular} need a policy sketch to describe task precisely, which needs human transform tasks into a special form that can be comprehended by the agent. These kinds of methods need not the only representation of the task, but the way that how they are executed in a high-level step. When a new task cannot be expressed by the prior, such as adding an extra high-level step or facing a new task composed of subtasks but the sequence is unseen, their agents can hardly complete the task. 

Different from their methods, our work takes natural language as a task representation, which is general prior and can describe almost any task. A general representation means that the agent can complete most of the tasks following the instructions, even unseen environment. So our work can adapt to new tasks which are even more complex and difficult than the training tasks.

\section{Learning Invariable Semantical Representations by Element-randomization}

In this section, to verify the effectiveness of our method, we provide a qualitative mathematical analysis to show that introducing randomness into elements can decouple the goal element with others and make completing tasks equivalent to extracting invariable representations of the goal elements.  

\subsection{Problem Statement}
We consider a finite-horizon, goal-conditional \textit{Markov Decision Process} (MDP) with language as instruction, which is defined as $<\mathcal{S,G,I,A,P,R}, \rho, \gamma >$, where $\mathcal{S}$ is the state set, $\mathcal{G}$ is the invariable subgoal set, $\mathcal{I}$ is the language instruction set, $\rho (I)$ is the initial distribution of instruction $I \in \mathcal{I}$, $\mathcal{A}$ is the action set, $\mathcal{P:S \times I \times A \times S \rightarrow} [0,1)$ is the state transition function representing for the probability from state and instruction to next state, $\mathcal{R:S \times A \rightarrow} \mathbb{R}$ is the reward function, and $\gamma \in [0,1)$ is the discount factor. The objective of reinforcement learning is to learning a policy $\pi(a_t|s_t, I)$. Every instruction can stand for a class of similar tasks. The RL framework is built on the whole task set $\mathcal{T}$. 

\subsection{Methodology Analysis of Equivalence}
To extract invariable semantical representations, we design the task with element-randomization method. Our main idea is that the agent should extract the representations by learning in RL tasks. Here we will give the proof that maximizing the cumulative return with element-randomization is equivalent to maximizing the occurring probabilities of the goal elements. Connecting the invariable subgoals with these elements by RL policy will endow these subgoals fixed meaning and make them semantical representations.

\begin{mydef}
	Let task $T \in \mathcal{T}$. The objective function is maximizing the expected calculative return $G$ from all the tasks $T$.
	$$ J = \mathbb{E}_{T \in \mathcal{T}} [G]$$
\end{mydef}

Different from traditional RL process that maximizing reward from single task or several task, we define the objective function among a large set of tasks. In our setting, an instruction $I$ can represent a class of tasks $T$ with same goal and every task can be completed by several correct trajectories $\tau$. 

With element-randomization method, we have:
\begin{mytheo}
	Introducing randomness with maximizing the cumulative return $J$ is equivalent to maximizing the occurring probability of invariable goal element $\epsilon$ with the spare reward setting.
	
	$$
    \mathbb{E}_{T \in \mathcal{T}} [G]	= \mathbb{E} [\sum \limits_{\epsilon} \rho (I_\epsilon) \pi(\epsilon| I_\epsilon)]	
	$$
\end{mytheo}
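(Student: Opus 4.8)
The plan is to start from the definition $J = \mathbb{E}_{T \in \mathcal{T}}[G]$ and progressively rewrite this expectation until all task-specific (randomized) components drop out, leaving only the goal-element occurrence probability. First I would expand the outer expectation over tasks by grouping tasks according to their shared goal element: since an instruction $I$ stands for an entire class of tasks that differ only in randomized elements but share the same goal element $\epsilon$, I can partition $\mathcal{T}$ by $\epsilon$ and write $\mathbb{E}_{T \in \mathcal{T}}[G] = \sum_\epsilon P(\epsilon)\, \mathbb{E}_{T \mid \epsilon}[G]$, where the inner expectation averages over all randomized instantiations that share the goal element $\epsilon$.

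The second step invokes the sparse-reward assumption stated in the theorem. Under a sparse reward, the cumulative return $G$ of any trajectory $\tau$ is nonzero only on those trajectories that actually reach the goal element $\epsilon$; concretely $G_\tau = R \cdot \mathbf{1}[\tau \text{ reaches } \epsilon]$, with the discount factor $\gamma$ absorbed into the reward scale. Taking the trajectory-level expectation under the policy $\pi$ then converts $\mathbb{E}_{T \mid \epsilon}[G]$ into $R$ times the probability that a rollout reaches $\epsilon$ given the instruction $I_\epsilon$, which is exactly the quantity $\pi(\epsilon \mid I_\epsilon)$.

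The third step is where \emph{element randomization} does the essential work. Because every task-specific element has been randomized independently of $\epsilon$, the probability of reaching $\epsilon$ must be computed by marginalizing over all these randomized configurations. I would argue that any policy component depending on a randomized, non-goal element contributes a factor that averages to a constant across the randomization and therefore cannot influence the maximization, so that only the dependence on the invariant goal element $\epsilon$ survives. Identifying $P(\epsilon)$ with the instruction prior $\rho(I_\epsilon)$ (using the one-to-one correspondence between instructions and goal elements) and rescaling so that $R = 1$ yields the claimed identity $\mathbb{E}_{T \in \mathcal{T}}[G] = \mathbb{E}[\sum_\epsilon \rho(I_\epsilon)\,\pi(\epsilon \mid I_\epsilon)]$.

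The hard part will be making the marginalization in the third step rigorous: precisely, showing that randomizing all non-goal elements forces the optimal policy to be a function of $\epsilon$ alone, and that the contributions of the randomized elements genuinely cancel rather than merely being suppressed. This requires a clean independence assumption between the goal element and the randomized elements, together with a careful definition of what it means for a representation to \emph{occur} with probability $\pi(\epsilon \mid I_\epsilon)$. I expect the argument to lean heavily on the sparse-reward structure to guarantee that no spurious reward can be collected from the randomized elements, thereby decoupling the goal from its environment and establishing the equivalence.
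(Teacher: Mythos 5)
Your proposal follows essentially the same route as the paper's proof: expand the expected return over instructions with prior $\rho(I_\epsilon)$, then over tasks and trajectories, use the sparse-reward indicator $R(\tau\mid T,I_\epsilon)=1$ iff $\epsilon\in\tau$ to keep only successful trajectories, and invoke element randomization to collapse the marginal over randomized non-goal elements into $\pi(\epsilon\mid I_\epsilon)$. The only step you omit is the paper's intermediate factorization of the trajectory probability into $\pi^{L}(a\mid s,\epsilon)\,\pi^{H}(\epsilon\mid T,I_\epsilon)$, which serves to motivate the hierarchical architecture rather than to close the argument; and the decoupling step you flag as the hard part is left equally informal in the paper itself, so your acknowledged gap is precisely the paper's gap as well.
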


\begin{proof}
	For every language instruction $I$, it will command agent to interact with a goal element $\epsilon$, so that the function can be rewritten as:
	\begin{equation}
	J = \mathbb{E}_{T \in \mathcal{T}} [\sum \limits_{\epsilon} \rho (I_\epsilon) G(I = I_\epsilon)] 
	\end{equation}
	For all the task $T$ of this instruction, there is:
	\begin{equation}
	J = \mathbb{E}_{T \in \mathcal{T}} [\sum \limits_{\epsilon} \rho (I_\epsilon) \sum \limits_{T} P(T|I_\epsilon)G(T|I_\epsilon)]
	\end{equation}
	where the $P(T|I_\epsilon)$ and $G(T|I_\epsilon)$ are the distribution and corresponding return of task $T$. The return can be further written as follow for all the trajectories $\tau \in T$ with policy $\pi(\tau)$:
    \begin{equation}
	G(T|I_\epsilon) = \sum \limits_{\tau} \pi(\tau)R(\tau|T,I_\epsilon)
    \end{equation}
	In our design, only when agent interacting with correct element will it gain reward $1$, otherwise $0$:
	\begin{equation}
	R(\tau|T,I_\epsilon) = \begin{cases} 
	1,  & \mbox{if } \epsilon \in \tau \\
	0, & \mbox{otherwise} 
	\end{cases} \label{func1}
	\end{equation}
	Here $\epsilon \in \tau$ means that agent correctly interact with the goal element in this trajectory. So the total objective function can be written as:
	\begin{equation}
	J = \mathbb{E}_{T \in \mathcal{T}} [\sum \limits_{\epsilon} \rho (I_\epsilon) \sum \limits_{T} P(T|I_\epsilon)\sum \limits_{\tau} \pi(\tau)R(\tau|T,I_\epsilon)] \label{func2}
	\end{equation}
	Then put equation (\ref{func1}) into (\ref{func2}) and explicitly extract invariable element $\epsilon$, we get
	\begin{align}
	\begin{split}
	J = \mathbb{E}_{T \in \mathcal{T}} [\sum \limits_{\epsilon} & \rho (I_\epsilon) \sum \limits_{T}  P(T|I_\epsilon)  \sum \limits_{\underbrace{\tau : \epsilon \in \tau}_{\mbox{\shortstack[c]{\fontsize{6.8pt}{\baselineskip}\selectfont successful \\ \fontsize{7.0pt}{\baselineskip}\selectfont trajectories }}}}\\  & \sum \limits_{s,a \in \tau} \underbrace{\pi^{L}(a|s, \epsilon)}_{\mbox{\shortstack[c]{\fontsize{6.8pt}{\baselineskip}\selectfont policy from invariants \\ \fontsize{7.0pt}{\baselineskip}\selectfont to action }} }
	 \underbrace{\pi^{H}(\epsilon | T, I_\epsilon)]}_{\mbox{\shortstack[c]{\fontsize{6.8pt}{\baselineskip}\selectfont policy from language \\ \fontsize{7.0pt}{\baselineskip}\selectfont to invariants }} }
	\end{split} \label{func3}
	\end{align}
	where the irrelevant element will be eliminated by $R = 0$ with sufficient exploration.

	When introducing randomness into task-agnostic elements, the goal element will be decoupled with others. As a result, getting summation of the probability of $\pi(\epsilon \in \tau)$ is equal to get the total probability of the appearing of $\epsilon$, that is:
	\begin{equation}
	J = \mathbb{E} [\sum \limits_{\epsilon} \rho (I_\epsilon) \pi(\epsilon| I_\epsilon)] 
	\end{equation}
\end{proof}

By the theory, we leverage the subgoal $g \in \mathcal{G}$ to represent the goal element $\epsilon$. As a result, the subgoals represents invariable semantics, which can be reused in any task unambiguous. The policies are also built according to the subgoal, which can be general policies among tasks.

\section{Building Hierarchical Semantical Invariants Learning Network}
In this section, we will show the main idea of our method and present our framework for training a two levels of hierarchical policy with the object-oriented subgoal space guiding by equation (\ref{func3}) as follow. 
\begin{enumerate}
	\itemsep=0pt
	\item The high-level policy receives language instructions and observation of the environment and chooses subgoal as $\pi^{H}(\epsilon | T, I_\epsilon)$.
	
	\item The low-level policy receives the subgoal and executes corresponding actions in stochastic environment as $\pi^{L}(a|s, \epsilon)$.
	
	\item For learning extensible policy with the help of language instructions, we will show that we make use of an augmented-memory to record language as abstracted trajectories for replaying in the unseen new tasks without retraining.
	
\end{enumerate}

\subsection{Training Low-Level Stable Object-Oriented Subgoal Executor Policy}
 
\begin{figure}
	\centering
	\includegraphics[scale=.32]{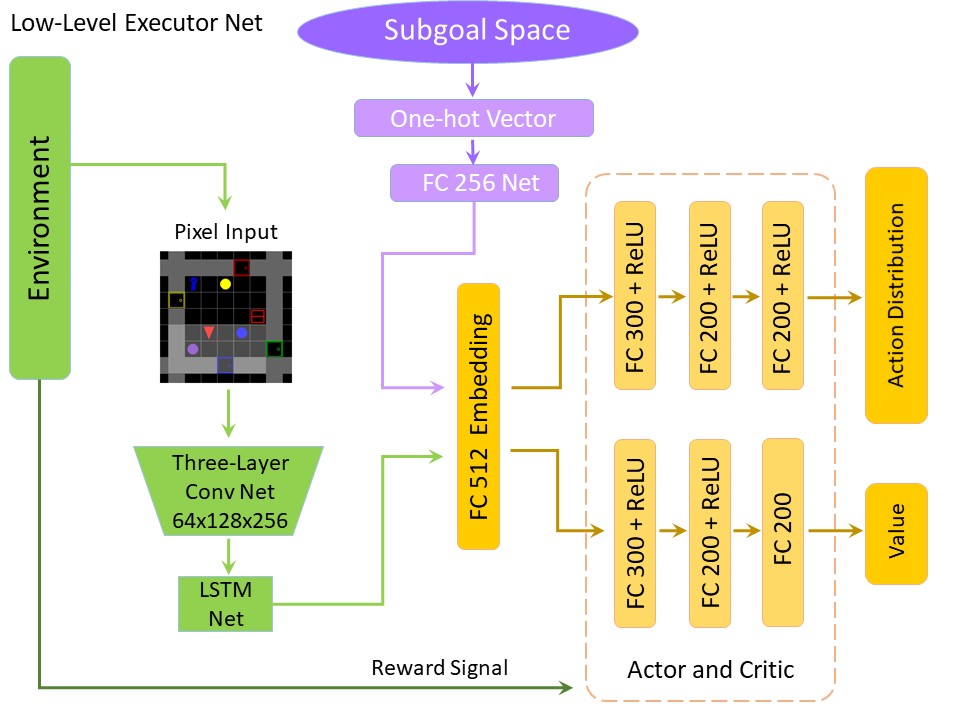}
	\caption{We build the low-level policy with multi-process A2C algorithm, which receives both the subgoal from the high-level policy and the pixel observation of the environment. We called it subgoal executor network (SEN).}
	\label{FIG:3}
\end{figure}

Here we show how to leverage element randomization method to build stable low-level policy, which we called subgoal executor network (SEN).

As we set the subgoals as semantical invariants, it requires the corresponding policies to adapt to the subgoals. That means these policies should be task-agnostic and general. To solve this problem, we design the low-level policy as a stable object-oriented policy, which focuses on specific objects or attribution and can be reused in any task.

However, to build a stable object-oriented policy, merely RL method cannot provide sufficient motivation. As we said above, we introduce the element-randomization method to make completing tasks equivalent to learning semantical invariants. Specifically, we use massive simple tasks that can be completed by only interacting with one object. Such as ``opening a door" or ``pick up a ball" with only one goal are all simple tasks. These simple tasks stand for the basic element composing the environment. They are quite few so that an ergodic sampling of them is acceptable and costs little.

\begin{algorithm}[h]
	\caption{Low-level Subgoal Executor Learning Algorithm}\label{algorithm}
	\begin{algorithmic}[1]
		\STATE Initialize multi-process actor parameters $\theta^i_a$ for $ i \in [1,n]$
		\STATE Initialize multi-process value parameters $\theta^i_v$ for $ i \in [1,n]$	
		\FOR {episodes in 1,M}
		
		\FOR {$i \in [1,n]$}
		
		\STATE Reset gradients: $d\theta^i_a$ and $d\theta^i_v$
		\STATE Synchronize thread-specific parameters
		\STATE Sample subgoal $g\in \mathcal{G}$ in uniform distribution 
		\REPEAT
		\STATE Perform $a_t$ according to policy $\pi(a_t|s_t,g)$
		\STATE Receive reward $r_t$ and new state $s_{t+1}$
		\STATE $t \leftarrow t+1$
		\UNTIL {terminal $s_T$ or $t-t_{start}==t_{max}$}
		\STATE Set
		$$R = \begin{cases} 
		0,  & \mbox{for terminal state } s_T \\
		V(s_t,\theta'_v), & \mbox{otherwise}
		\end{cases}$$
		\FOR {$j \in {t-1,\dots,t_{start}}$}
		\STATE $R \leftarrow r_ j + \gamma R$
		\STATE Accumulate gradients wrt $\theta'_a$
		$$
		d\theta_a \leftarrow d\theta_a+\nabla_{\theta_a'}\log\pi(a_j|s_j;\theta_a')(R_i-V(s_i;\theta_v'))
		$$
		\STATE Accumulate gradients wrt $\theta'_v$
		$$
		d\theta_v \leftarrow d\theta_v + \frac{\partial}{\partial \theta_v}(R_j-V(s_j;\theta_v'))^2
		$$
		\ENDFOR
		
		\ENDFOR
		\STATE Synchronize and update parameters

		\ENDFOR

	\end{algorithmic}

\end{algorithm}

Then we introduce other interference objects into these tasks. Actually, in a room, the goal object has a random position. And we use an unambiguous one-hot vector as a subgoal to express the final goal so that every dimension of the subgoal represents one fixed object. Each task has disturbances which are independent of the goal of the task, for example, task-independent objects. Only when the agent identifies the subgoal correctly, overcoming the random disturbances and interacts with the correct object, will it obtain a sparse reward which is discounted according to the steps it used. The task setting brings a constraint to the learning process besides RL motivation, forcing the agent to interact with the only object and ignore the disturbances. Then the agent will build a robust object-oriented policy that can be used in any other task and environment consisting of the same elements. 

Concretely, the structure of the low-level network is shown in Fig. \ref{FIG:3}. The network receives the subgoals and pixel observation. The observation inputs into a a three-layer CNN, of which the output will be given to a one-layer LSTM \cite{,hochreiter1997long}. Then the output embedding of LSTM connects with the subgoal embedding (get from a one-layer FC network), then input into three-layers FC network, learning by multi-process A2C \cite{,mnih2016asynchronous} algorithm to train a stable policy, as shown in algorithm 1.

\begin{figure}[t]
	\centering
	\includegraphics[scale=.32]{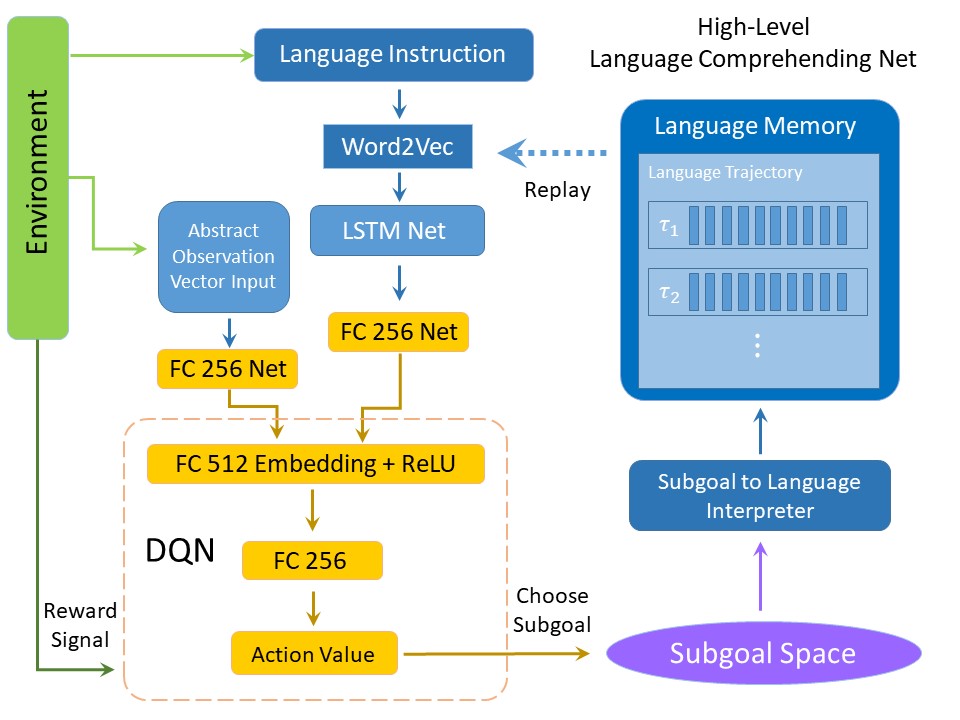}
	\caption{We build the high-level policy with DQN, which receives the abstract observation and the language instructions and we called language comprehending network (LCN). Also, the learned subgoals can be interpreted into learned language form, be stored in language memory buffer, and be reused in different tasks.}
	\label{FIG:4}
\end{figure}

\subsection{Building High-Level Language Comprehending Policy}

Here we show how to build abstract high-level policy, which we called language comprehending network (LCN).

After building a stable low-level policy, we intend to connect language with object-oriented subgoals. We describe the goal of each simple task with one complete sentence and just use RL method to learn to make a one-step decision in the same tasks with the low-level. 

For simplicity, the high-level observation is abstracted states similar to subgoals of the existing objects in a room. For instance, if the first position representing for a red box is ``2", it means that there are two red boxes in the current room. It is a way to control the low-level flexibly according to the change of observation instead of making a decision in a fixed step by giving subgoal every 5 steps like traditional HRL methods \cite{,vezhnevets2017feudal,nachum2018data}.

However, the high-level policy is not just a one-to-one interpreter from language to subgoals. That will limit the generalization abilities of the policy. We consider leveraging the fuzzy semantics of language. For instance, our policy also learns a fuzzy description such as ``open a door", which does not accurately express the goal. Then the ideal result is that the high-level policy gives the same probability among all the existing doors. 

In short, the high-level should receive language instruction and the abstracted observation then gives subgoals in sequence. The training tasks should lead the agent to a comprehension of the semantic information according to the observation instead of just interpreting language instruction. That will strongly increase the generalization abilities of the whole policy.  

Actually, we make use of \textit{nltk} \cite{,bird2006nltk} as a language preprocessing model to change every word into an embedding form Word2Vec \cite{,mikolov2013efficient}. The high-level network receives the language embedding vectors in an LSTM \cite{,hochreiter1997long} model. The observation is an abstracted vector said above representing existing objects from the original pixel state, which is used as input in FC network. Then we connect the output of the two networks as abstract input, build an FC network for training DQN \cite{,mnih2013playing} as shown in algorithm 2.

The output is a 24-dimension one-hot vector representing an object. The high-level policy makes decisions when the abstracted state changes or the low-level policy goes beyond the stated max step. When training, only when the agent correctly interacts with the appointed object, will it obtain the sparse reward 1, otherwise 0.

\begin{algorithm}[h] 
	\caption{High-level Task and Language Comprehending Policy Training Algorithm}\label{algorithm2}
	\begin{algorithmic}[1] 
		\STATE Initialize replay memory $\mathcal{D}$ to capacity $N$
		\STATE Initialize action-value $Q$ with random weight
		\STATE Initialize average success rate $s_r = 0$
		\STATE Set expected error rate $\varepsilon$
		
		\WHILE {$1 - s_r > \varepsilon$} 
		
		\STATE Sample instruction $I\in \mathcal{I}$ in uniform distribution
		\STATE With probability $\epsilon$ select a random subgoal \textit{existed in current observation} 
		\STATE Otherwise select subgoal $g_t = \max_g Q^*(O_t, g_t|I)$  with observation $O_t$
		\STATE Wait the low-level executing $g_t$ until success or   observation changed
		\STATE Store transition $(O_t,g_t,r_t,O_{t+1})$ in $\mathcal{D}$
		\STATE Sample mini-batch of transitions $(O_j,g_j,r_j,O_{j+1})$  from $\mathcal{D}$
		\STATE Set $$y_j  = \begin{cases} 
		r_j,  & \mbox{for terminal state } S_T \\
		r_j+\gamma \max \limits_{g_{j+1}}Q_{\theta^-}, & \mbox{otherwise}
		\end{cases}$$
		\STATE Update network parameters $\theta$ with
		$$d\theta \leftarrow d\theta + \frac{\partial}{\partial \theta} (y_j-Q(O_j,g_j;\theta))^2$$
		\STATE Calculate average success rate $s_r$ every 100  episodes  
		\ENDWHILE 
		
	\end{algorithmic} 
\end{algorithm}

\subsection{Constructing Extensible Policy With Abstract Language Trajectories}
Here we show how to build extensible compositional policy by an augmented-memory for generalization.

By combing the high-level and low-level policy, we obtain a policy that can correctly execute one sentence task such as ``pick up the red ball" in spite of environmental changing. However, when facing long-horizon tasks, which should be described by a section, the language input pattern is unseen and unrecognizable. That is also the problem that all the end-to-end frameworks facing, due to the limited generalization abilities \cite{,lake2018generalization}. For example, if training the agent by tasks that are described by less than three sentences but testing by the ones described by more than ten sentences, the end-to-end framework struggle to identify the content. Therefore, we design an additional structure to use language to avoid the problem.

We reiterate that we have already built stable object-oriented low-level policies. That means not only the language can be correctly executed by these policies, but also when executing the policy, it can be described by language unambiguously and correctly. The result is that, when the agent exploring by subgoal in a new unseen task, what the agent does can be output as the learned language by itself. These languages can be formed as the trained pattern. So if the agent stores these languages, it means that the agent can store trajectories in an abstracted language form, meanwhile can comprehend and reuse them. So the agent can explore a new task rapidly by low-dimension subgoal space and memorize the trajectory. Once the agent obtains the final sparse reward, it can solve the task by replay the language trajectories. If the new task has some randomness, the agent can also explore again by the abstracted trajectories. For example, if ``open the red door" is in the successful trajectory, the agent can heuristically explore by related words such as ``pick up a red object" or ``open a door". That is a way to generalize by abstract language space and build extensible policy.

Actually, we interpret every subgoal into language vector form in one sentence and store them into a memory buffer as the abstract language trajectories. When exploring a new task after training, the agent will explore by stochastic policy on subgoal space. When replaying language trajectory, the language will be taken out and be executed in sequence.

\section{Experiments}
We design experiments in stochastic and partial observation environment to show our method has these superiorities: 

(i) Our low-level policy can overcome random disturbance, correctly interacting with the object without other redundant actions. With the abstract subgoals and guidance of designed tasks, the low-level policy becomes a shared object-oriented policy which can execute the goal correctly.  

(ii) In several unseen new tasks without instruction guidance (gradually increasing the complexity), our agent can explore with object-oriented policies in the abstracted subgoal space, which is much smaller than the original state space, such that the agent can efficiently attain extremely sparse reward.

(iii) Once obtaining the reward by few-shot exploring, the agent can solve the task by replaying language trajectory memory and explore heuristically by language. Even new tasks with diversified randomness can be solved in a high probability. 

\subsection{Experiment Setting}

We choose BabyAI \cite{,chevalier-boisvert2018babyai} as our experiment platform. In this platform, there are a large number of various tasks which consist of many object-oriented tasks whose final goal will be described by structured synthetic natural language. These tasks are often generated with massive randomness, including random position, attribution, color, and other random disturbing objects. These tasks are all partially observed and long time horizon. They are difficult for the traditional RL method due to the frequent bottleneck state \cite{,mcgovern2001automatic} with sparse reward, which also forces the agent to identify the object and corresponding instructions. 

Based on the platform, we design a series of tasks, which consist of one or more $7 \times 7$ rooms. In every room, the agent will receive a pixel partial observation of the whole room, an instruction of natural language only describing the final goal of the task, and an abstracted observation, which is a vector representing the existing objects of the room. There are many objects in the room, some of which are the goal of the task and some are the disturbance. To go to the next room, the agent should open the correct door, or pick up the corresponding key of the door. Only when the agent achieves the final goal, will it get the sparse reward. We make use of these tasks to verify the superiority of our method.

The experiments setting are shown as follow:

\textbf{One Room Task :} As shown in Fig. \ref{FIG:ONE}, they are training tasks to interact with just one object in a single room. The tasks need agent to receive and comprehend the instruction given by the environment, meanwhile overcoming randomness and interacting with the correct object. In this task, we train our low-level policy with pixel observation and high-level policy with abstract observation respectively. The goals of the task consist of six colors and four shapes of objects, which build a multi-task joint training process.
\begin{figure}[h]
	\centering
	\includegraphics[scale=.5]{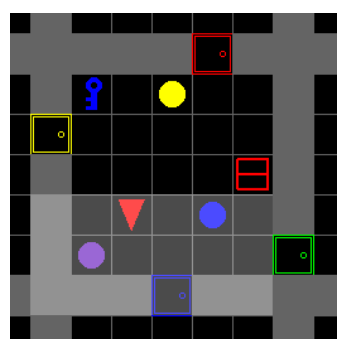}
	\caption{These tasks is used for training basic abilities of the agent. Every task has only one goal object.}
	\label{FIG:ONE}
\end{figure}

\textbf{Multiple Room Task :} As shown in Fig. \ref{FIG:THREE}, \ref{FIG:5_9}, they are many test tasks aiming to interact with a series of objects, including collecting keys and opening the right door, and entering in the next room until completing the task. In this task, there is \textit{no language instructions guidance}. We design the number of rooms of 3, 5, 9 with increasing difficulty. Comparing with the training task, these test tasks are all "out-of-domain". Every object in the room is fixed, but the position is random. Only the final object has a reward. Considering that our language memory can change and adapt to different tasks, the baselines are allowed to retrain in the tasks but ours' is not. These hard tasks test the compositional generalization abilities of the method in unseen tasks.
\begin{figure}[h]
	\centering
	\includegraphics[scale=.5]{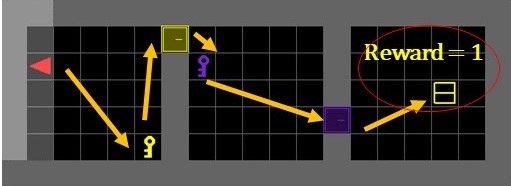}
	\caption{These tasks consist of three room with several keys and doors. The final room has a goal object with reward 1.}
	\label{FIG:THREE}
\end{figure}

\begin{figure}[t]
	\centering
	\includegraphics[scale=.5]{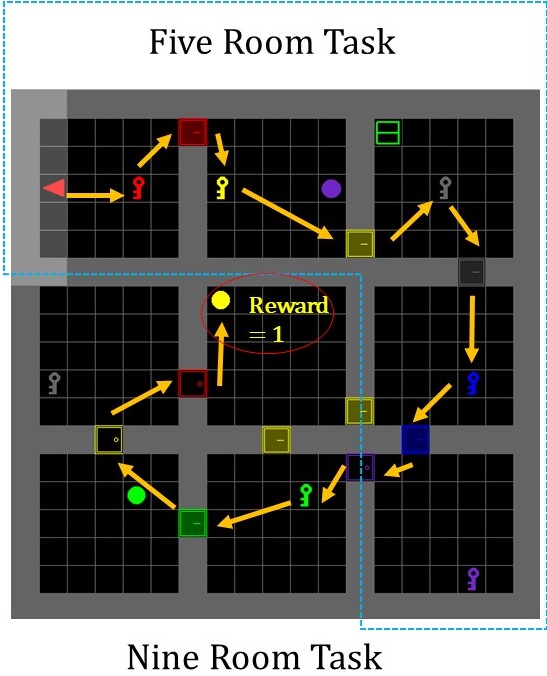}
	\caption{These tasks consist of many rooms with disturbing object. They require long-term exploration to obtain the final reward.}
	\label{FIG:5_9}
\end{figure}

\textbf{Multiple Room Task with Randomness:} This task is modified from the 9-room task above, where the objects are random besides the positions. For example, the last time there is a ``red door" on the wall, the next time there may be other doors, and the key to open the door changes with it. The task is to test whether the agent can not only generalize from compositional policies but also generalize from language space heuristically. Because the change of the object has a rule, which can be represented by language. Also, the task \textit{do not provide language instructions}. The extremely hard task shows superiorities of the method which can complete it.

\begin{figure*}
	%\centering
	\includegraphics[scale=.82]{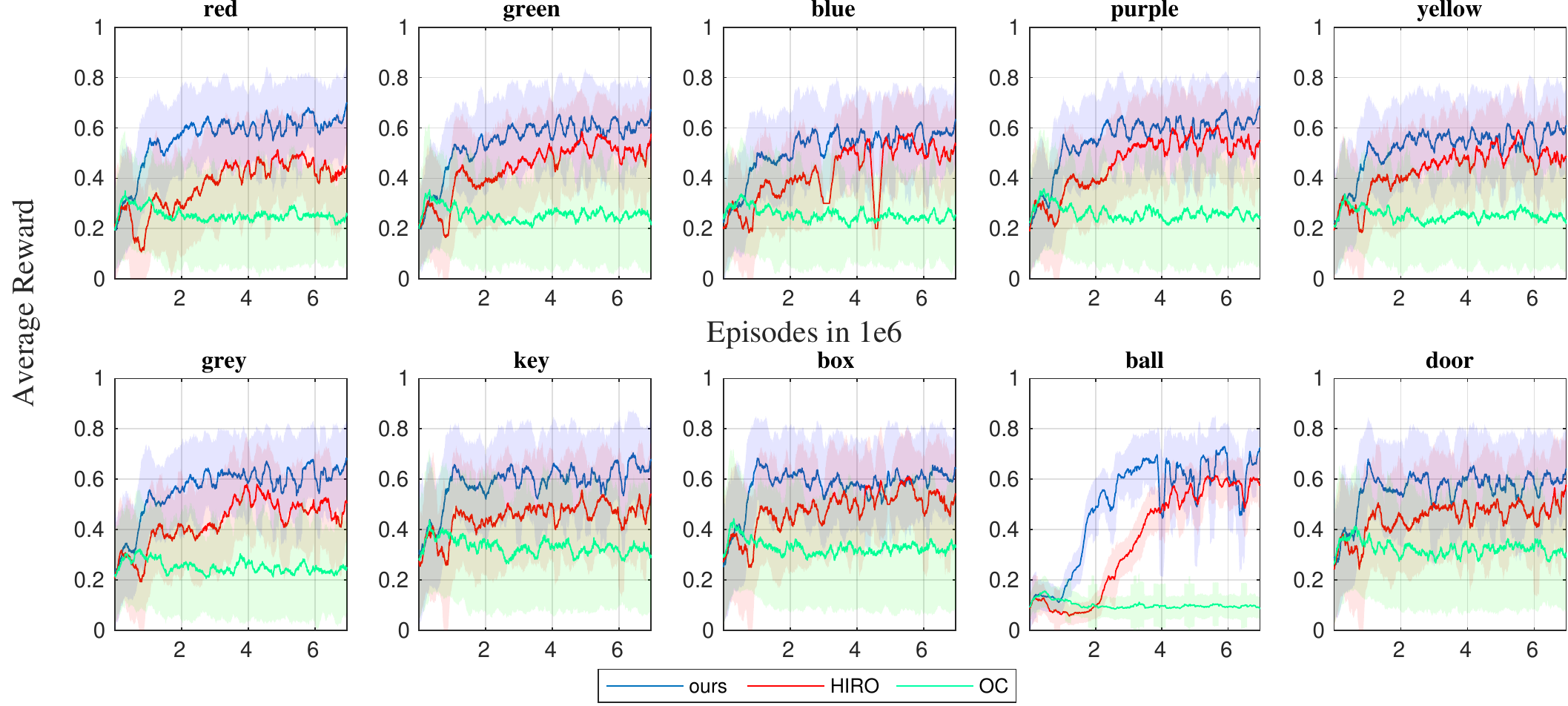}
	\caption{The figure is about the results of the one-room training experiment. They are divided into different kinds and shown according to colors and attributions in 10 classes. They are colors of \textbf{red, green, blue, purple, yellow, grey}, and shape of \textbf{key, box, ball, door}. Actually, the variances of the three shapes are influenced by the relationship of positions while the ball is not. So the result of the ball has lower variance.}
	\label{FIG:5}
\end{figure*}

\subsection{Baseline}

Here we will introduce the baseline. In our experiments, some of original methods lack the adaptive capabilities of language input and stochastic environment. We make experiments with both these methods and their modified version adapting to the task setting for fair comparison.  

The comparative baselines are shown as follow:

\textbf{Option-Critic.} It is a general and classical end-to-end hierarchical reinforcement learning method for many temporally extended tasks.\cite{,bacon2017the} These methods automatically build options by AC framework with learning, of which the option represents sub-policy for different subtasks. Considering that the method is originally designed without instruction, we design an LSTM  network the same to ours, to preprocess language input with \textit{nltk} as additional state. We should ensure that the information that other methods can get is equal to ours. For a fair comparison, we also modify the network and use a one-room task as the pre-training task to improve the performance of the baseline. 

In Table \ref{table}, they are ``OC4-ORI" for original OC4 method, ``OC4-INS" for OC4 adapting to language input, ``OC4-PRE" for OC4 with language and pre-training in ONE ROOM task.

\textbf{HIRO.} It is a data-efficient and general HRL method for long-horizon complex RL tasks \cite{,nachum2018data}. This method builds off-policy model-free RL framework with a correction to re-label the past experience. Same as OC, we also introduce natural language as complete task information, for identifying the final object. Also, we modified the network to adapt to our one-room pre-training task to improve the performance.

In table \ref{table}, they are ``HIRO-ORI" for original HIRO method, ``HIRO-INS" for HIRO adapting to language input, ``HIRO-PRE" for HIRO with language and pre-training in ONE ROOM task.

\textbf{Flat.} It is the low-level policy of our method without language instruction. The baseline is to show whether the task can be solved by RL \textit{without language}. 

\textbf{Traditional RL.} It is a baseline shows the capabilities of traditional RL methods. Including ``VANILLA-RL" for basic AC algorithm, ``STOCHASTIC" for stochastic policy, ``SHAPING" for reward shaping method. Many researchers deal with complex tasks by introducing additional rewards for the key objects or ``bottleneck state". Here we will show that in quite long time scale tasks, although agents can get some reward, the poor sampling efficiency of traditional RL will lead to failure and cannot achieve the final goal.

\subsection{Comparative Analysis in One-Room Basic Experiment}

For learning stable policy, we used an on-line 4-process A2C algorithm. That is, our method collects 4 trajectories and updates the network with an average gradient. That will reduce the variance brought by randomness. For a fair comparison, the baseline also uses multi-process training.

The result curves are shown in Fig. \ref{FIG:5}. In this experiment, our low-level policy compares with HIRO and OC methods. The result shows that our low-level policy can overcome the randomness and successfully learning these tasks gradually. OC and HIRO may not adapt to tasks with massive randomness, especially OC relies on the number of option and HIRO needs a correction for states as subgoals to learn policy efficiently. However, when the environment contains stochastic elements, that will make the correction a negative motivation. So that their performance in this task is a little weaker with a lower success rate.    

Our high-level policy is also pre-trained in this task, receiving abstracted observation and make one-step decision to give subgoal, of which the task is simplified. It is to learn identifying and decision-making abilities. Consider that the difficulty of the task is easier than the low-level policy and baselines, the curve will not be shown in the figure.

\begin{table*}
	\centering
	\caption{ Success percentage of generalizing tasks, where $<x$ means that success episodes is less than $x$ in 100 episodes. Traditional RL method are adequately tested in more than 500000 episodes. STEP is the minimum step to complete the task with randomness of position.}
	\scalebox{1.2}{
		\begin{tabular}{cccccc}
			\toprule
			\midrule
			%\multicolumn{5}{c}{TASK}\\ \cmidrule(r){2-5}
			&TASK & Three Room & Five Room & Nine Room & Random Nine Room  \\
			
			\multirow{3}{*}{\mbox{\shortstack[c]{\fontsize{8pt}{\baselineskip}\selectfont TASK \\ \fontsize{8pt}{\baselineskip}\selectfont SETTING }}} & STEPs-ORI-MIN &  20 $\sim$ 38  &  32 $\sim$ 68 &  44 $\sim$ 98 &44 $\sim$ 98  \\
			
			&STEPs-HIGH &     5     &     9     &     13     & 13
			\\
			
			&TEST EPISODE 	& 3000& 8000 & 15000 & 15000 \\
			
			\midrule
			&&\multicolumn{4}{c}{\centering SUCCESS PERCENTAGE (success / episodes \%)}  \\ \cmidrule(r){3-6}
			
			\multirow{3}{*}{\mbox{\shortstack[c]{\fontsize{8pt}{\baselineskip}\selectfont TRADITIONAL \\ \fontsize{8pt}{\baselineskip}\selectfont RL METHOD }}}&STOCHASTIC &$< 1e^{-3}$& $<1e^{-4}$&$<1e^{-4}$&$<1e^{-4}$ 
			\\
			&VANILLA-RL  &$< 1e^{-3}$& $<1e^{-4}$&$<1e^{-4}$&$<1e^{-4}$ 
			\\
			&SHAPING  &$< 1e^{-3}$& $<1e^{-4}$&$<1e^{-4}$&$<1e^{-4}$ 
			\\
			\midrule

			\multirow{3}{*}{\mbox{\shortstack[c]{\fontsize{8pt}{\baselineskip}\selectfont BASE \\ \fontsize{8pt}{\baselineskip}\selectfont LINE 1 }}}&HIRO-ORI &    $<1$   &$<1$        &  $<1$     &$<1$ 
			\\
			
			&HIRO-INS &  $<1$      & $<1$       &     $<1$   &$<1$ 
			\\
			&HIRO-PRE &    $ 39 \pm 17$    &   $ 5 \pm 10$    &    $<1$    &$<1$
			\\
			
			\midrule
			\multirow{3}{*}{\mbox{\shortstack[c]{\fontsize{8pt}{\baselineskip}\selectfont BASE \\ \fontsize{8pt}{\baselineskip}\selectfont LINE 2 }}}&OC4-ORI &    $<1$    &   $<1$     &  $<1$      &$<1$ 
			\\
			&OC4-INS &    $<1$   &   $<1$   &   $<1$     &$<1$ 
			\\
			&OC4-PRE &   $ 22 \pm 11$    &   $<1$     &    $<1$    & $<1$
			\\
			\midrule
			
			\multirow{2}{*}{\mbox{\shortstack[c]{\fontsize{8pt}{\baselineskip}\selectfont OUR \\ \fontsize{8pt}{\baselineskip}\selectfont METHOD }}}&FLAT &    $ 19 \pm 14$   & $ 1 \pm 5$ & $<1$ & $<1$
			\\
			& \textbf{LCN-SEN} &$ \textbf{79} \pm 14$&$\textbf{35} \pm 15$&$\textbf{ 28} \pm 13 $&$\textbf{ 6} \pm 8 $
			\\
			\midrule
			\bottomrule
		\end{tabular}
	}
	%\vspace{0.5cm}
	
	\label{table}
	
\end{table*}

\begin{figure*}
	\centering
	\includegraphics[scale=.5]{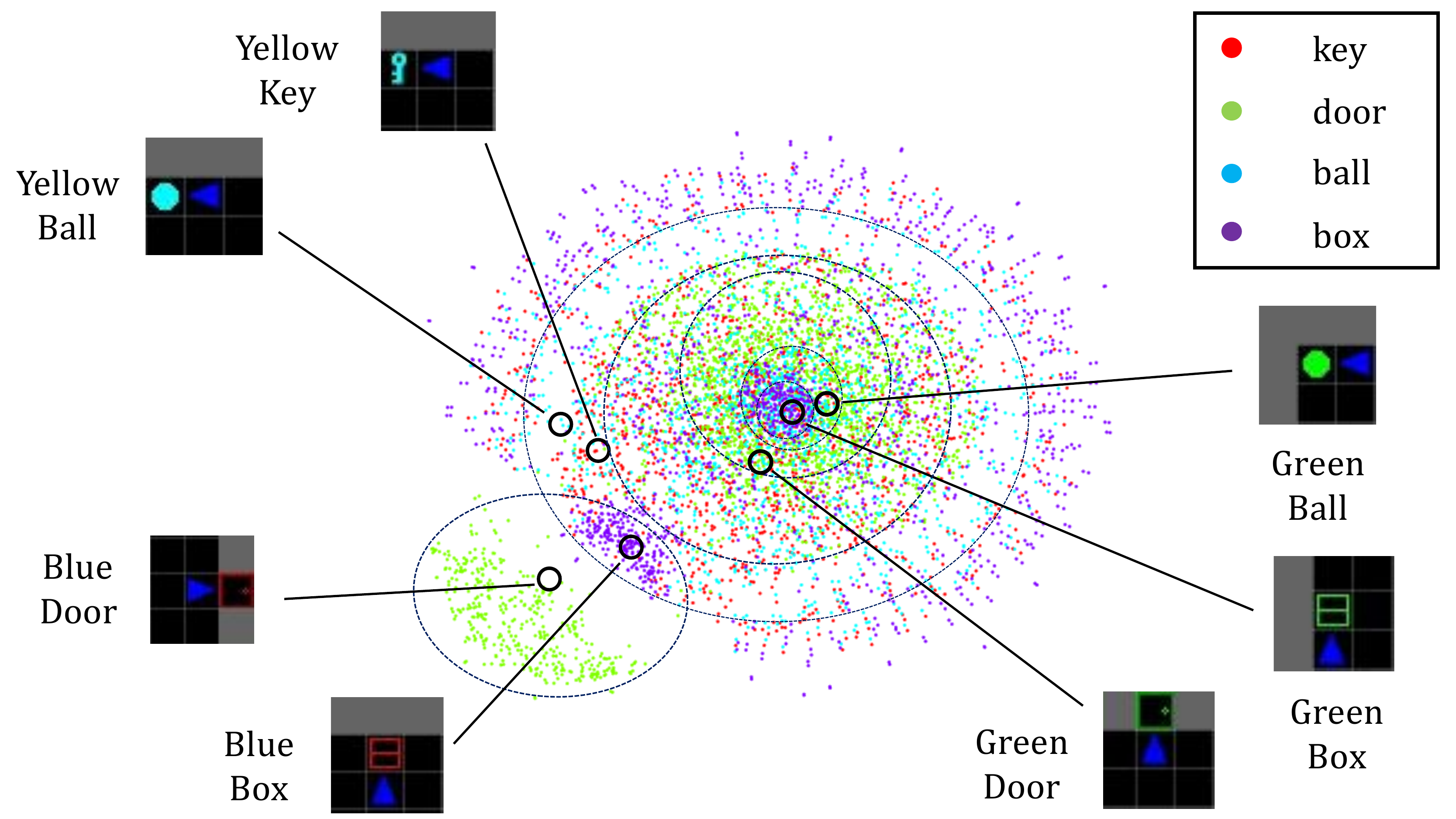}
	\caption{This is the a kind of visualization result of embedding of our network made by T-SNE method. The points of different classes are circular distributions. In this figure, we can see that the clustering result relies on many elements of the environment. Although some objects have different shape, the same color will made them closed. The figure shows that our method does learn some semantical information depending on the training process we designed.}
	\label{FIG:7}
\end{figure*}

\subsection{Generalization Experiment from Simple tasks to Complex Tasks}
These tasks are all long-horizon tasks. That means these tasks need the high-level policy of all the methods to take more than three-step decisions. Especially the 9-room task needs agent to interact with more than 15 different objects and takes more than 15 steps high-level decisions. These tasks are also extremely hard for most of the current RL methods due to the sparse final rewards and environment with randomness.

In these tasks, all the methods are allowed to retrain, except our methods. The one-room task will be seen as a pre-training tasks, and these task can be seen as OOD generalization tasks from the simple one to complex ones. Besides OC and HIRO, we add reward-shaping with traditional Deep RL methods (i.e., A2C), and OC and HIRO without pre-training. We want to verify the fact that, for complex tasks, although we can design an additional reward for shaping, the intricate relationship and the poor sampling efficiency will bring failure either. A frequent reward cannot always help to solve complex tasks.

Because only the final goal has a reward, the average reward can represents average success rate to some degree. The result shows that our method can rapidly explore in subgoal space and replay the language trajectory to complete the task in a high success probability.

\subsection{Visualization Experiment}
We leverage T-SNE \cite{,maaten2008visualizing} to show the semantical embedding learned by our training process as shown in Fig. \ref{FIG:7}. In this experiment, we visualize the embedding output of our network. We can see that our network does learned some semantical information with many kinds of different tasks. These ringlike clustering points has different semantics. Here we just show the result of different attributions. In fact all the elements, such as the shape, colors and the relative positions are all coupled, so all the labels mixed a little.

\subsection{Result of Random Long-Horizon Task}

The last experiment is shown for our strong ability to solve new difficult tasks with language replay buffer (See in Fig \ref{FIG:2} and Fig. \ref{FIG:5_9}). Facing such a hard task, our method still has a probability to complete the task (see in Table \ref{table}, the result of Random Nine Room). Our agent can explore in subgoal space rapidly even without the instruction of tasks as well as memorize the subgoal trajectory in a small buffer. Once getting the reward, a successful trajectory means the agent can explore the task with randomness by heuristic exploration. To the best of our knowledge, there is no method that can solve such a task with extremely sparse reward due to poor positive sample acquisition.

\section{Conclusion}
In this paper, we propose a new learning paradigm for building extensible and compositional language policy. Accordingly, we build a hierarchical RL policy with an invariable subgoal setting representing invariable semantics. These subgoals are stable and can be reused in any task of the environment. The two-level hierarchical model means that the agent can explore in low dimension semantical subgoal space. We also build an augmented-memory to record the trajectories of the agent by an abstract language form. It will help the agent generalize to new task by replaying language trajectories. 

However, the subgoal space of our method is fixed, which means we cannot generalize to completely new tasks, where the elements are unseen. And our method also has some limits which are caused by adapting to the environment. In future work, we will attempt to break through the shortage. Besides, we consider that if we can let the agent learn plenty of linguistic semantics, we can also teach it to think and make an inference by language, even interact with a human. Then it may build a general policy between quite different environments, such that one agent solves tasks in 2D and 3D space with similar semantics in the meantime, and can be guided by a human directly.

\ifCLASSOPTIONcaptionsoff
  \newpage
\fi

% trigger a \newpage just before the given reference
% number - used to balance the columns on the last page
% adjust value as needed - may need to be readjusted if
% the document is modified later
%\IEEEtriggeratref{8}
% The "triggered" command can be changed if desired:
%\IEEEtriggercmd{\enlargethispage{-5in}}

% references section

% can use a bibliography generated by BibTeX as a .bbl file
% BibTeX documentation can be easily obtained at:
% http://mirror.ctan.org/biblio/bibtex/contrib/doc/
% The IEEEtran BibTeX style support page is at:
% http://www.michaelshell.org/tex/ieeetran/bibtex/
%\bibliographystyle{IEEEtran}
% argument is your BibTeX string definitions and bibliography database(s)
%\bibliography{IEEEabrv,../bib/paper}
%
% <OR> manually copy in the resultant .bbl file
% set second argument of \begin to the number of references
% (used to reserve space for the reference number labels box)
%\begin{thebibliography}{1}

%\bibitem{IEEEhowto:kopka}
%H.~Kopka and P.~W. Daly, \emph{A Guide to \LaTeX}, 3rd~ed.\hskip 1em plus
%  0.5em minus 0.4em\relax Harlow, England: Addison-Wesley, 1999.
%\end{thebibliography}
\bibliographystyle{IEEEtran}
\bibliography{main}

% biography section
% 
% If you have an EPS/PDF photo (graphicx package needed) extra braces are
% needed around the contents of the optional argument to biography to prevent
% the LaTeX parser from getting confused when it sees the complicated
% \includegraphics command within an optional argument. (You could create
% your own custom macro containing the \includegraphics command to make things
% simpler here.)
%\begin{IEEEbiography}[{\includegraphics[width=1in,height=1.25in,clip,keepaspectratio]{mshell}}]{Michael Shell}
% or if you just want to reserve a space for a photo:

%\begin{IEEEbiography}{Michael Shell}
%Biography text here.
%\end{IEEEbiography}
%
%% if you will not have a photo at all:
%\begin{IEEEbiographynophoto}{John Doe}
%Biography text here.
%\end{IEEEbiographynophoto}
%
%% insert where needed to balance the two columns on the last page with
%% biographies
%%\newpage
%
%\begin{IEEEbiographynophoto}{Jane Doe}
%Biography text here.
%\end{IEEEbiographynophoto}

% You can push biographies down or up by placing
% a \vfill before or after them. The appropriate
% use of \vfill depends on what kind of text is
% on the last page and whether or not the columns
% are being equalized.

%\vfill

% Can be used to pull up biographies so that the bottom of the last one
% is flush with the other column.
%\enlargethispage{-5in}

% that's all folks
\end{document}